\newtheorem{theorem}{Theorem}
\title{GAN Ensemble for Anomaly Detection}
\newcommand*\samethanks[1][\value{footnote}]{\footnotemark[#1]}
\author{Xu Han\thanks{The first two authors contribute equally. This work has been accepted by AAAI 2021.},
    Xiaohui Chen\samethanks,
    Li-Ping Liu\\
}
\begin{document}
\maketitle
\begin{abstract}
\begin{quote}
When formulated as an unsupervised learning problem, anomaly detection often requires a model to learn the distribution of normal data. Previous works apply Generative Adversarial Networks (GANs) to anomaly detection tasks and show good performances from these models. Motivated by the observation that GAN ensembles often outperform single GANs in generation tasks, we propose to construct GAN ensembles for anomaly detection. In the proposed method, a group of generators and a group of discriminators are trained together, so every generator gets feedback from multiple discriminators, and vice versa. Compared to a single GAN, a GAN ensemble can better model the distribution of normal data and thus better detect anomalies. Our theoretical analysis of GANs and GAN ensembles explains the role of a GAN discriminator in anomaly detection. In the empirical study, we evaluate ensembles constructed from four types of base models, and the results show that these ensembles clearly outperform single models in a series of tasks of anomaly detection. The code is available at \url{https://github.com/tufts-ml/GAN-Ensemble-for-Anomaly-Detection} ~.

\end{quote}
\end{abstract}
\section{Introduction}
Anomaly detection is an important problem in machine learning and has a wide range of applications such as fraud detection \citep{abdallah2016fraud, kou2004survey}, intrusion detection \citep{sabahi2008intrusion, garcia2009anomaly}, and event detection \citep{atefeh2015survey}.  In most anomaly detection problems, only samples of normal data are given, and the task is to detect \textit{anomalies} that deviates from the normal data. A large class of anomaly detection algorithms \citep{hodge2004survey,gupta2013outlier,chalapathy2019deep} directly or indirectly model the data distribution and then report samples atypical in the distribution as anomalies.

Generative Adversarial Networks (GAN) \citep{goodfellow2014generative}, which are flexible models for learning data distributions, provide a new set of tools for anomaly detection. A GAN consists of a generator network and a discriminator network, which learn a data distribution through a two-player game. If the training is ideal, the distribution defined by the generator should be the same as the data distribution. Several studies apply GANs to anomaly detection, such as AnoGAN~\citep{schlegl2017unsupervised}, f-AnoGAN~\citep{f-Anogan}, EGBAD~\citep{EGBAD}, GANomaly~\citep{akcay2018ganomaly}, and Skip-GANomaly~\citep{akccay2019skip}. All these models use an encoder-decoder as the generator since the generator of a vanilla GAN cannot check a sample.  A synthetic sample from the decoder is either a reconstruction of a real sample or a new sample. The discriminator needs to distinguish synthetic samples from the decoder and samples from the real data. The anomaly score is usually computed by checking the reconstruction of a sample and the internal representation of a sample in the discriminator. These models perform well in a series of detection tasks. 

There still lacks a thorough understanding of the role of adversarial training in anomaly detection. Theoretical analysis \citep{goodfellow2014generative,arora2017generalization} indicates that the discriminator should do no better than random guessing at the end of an ideal training procedure. However, the internal representation of a discriminator is very effective in differentiating normal samples and anomalies in practice. Then there is a gap between theory and practice: how does the discriminator characterize normal samples' distribution?

Training a GAN is challenging because it needs to optimize multiple deep neural networks in a min-max problem. The optimization often has stability issues. If neural networks are not carefully regularized, there are also problems of mode collapse. Recent works show multiple generators or/and discriminators help to overcome those problems. Several studies \citep{durugkar2016generative,neyshabur2017stabilizing} use multiple discriminators to provide stable gradients to the generator, making the training process more smooth. Multiple generators, which essentially defines a distribution mixture, can capture multiple data modes \citep{hoang2018mgan}. \citet{arora2017generalization} analyzes the equilibrium of GAN models and show that a mixture of both generators and discriminators guarantees an approximate equilibrium in a GAN's min-max game. These ensemble methods have improved the performance in various generation tasks.   

In this work, we propose to use GAN ensembles for anomaly detection. A GAN ensemble consists multiple encoder-decoders and discriminators, which are randomly paired and trained via adversarial training. In this procedure, an encoder-decoder gets feedback from multiple discriminators while a discriminator sees  ``training samples'' from multiple generators. The anomaly score is the average of anomaly scores computed from all encoder-decoder-discriminator pairs. This ensemble method works for most existing GAN-based anomaly detection models. 

We further analyze GAN models in the context of anomaly detection. The theoretical analysis refines the function form of the optimal discriminator in a WGAN~\citep{arjovsky2017wasserstein}. The analysis further explains why the discriminator helps to identify anomalies and how GAN ensemble improves the performance.  

In the empirical study, we test the proposed ensemble method on both synthetic and real datasets. The results indicate that ensembles significantly improve the performance over single detection models. The empirical analysis of vector representations verifies our theoretical analysis.

\section{Background}

We first formally define the anomaly detection problem. Suppose we have a training set $\bX$ of $N$ normal data samples,  $\bX = \{\bx_n \in \bbR^d : n=1, \ldots, N\}$ from some unknown distribution $\calD$. Then we have a test sample $\bx' \in \bbR^d$, which may or may not be from the distribution $\calD$. The problem of anomaly detection is to train a model from $\bX$ such that the model can classify $\bx'$ as \textit{normal} if $\bx'$ is from the distribution $\calD$ or \textit{abnormal} if $\bx'$ is from a different distribution. Often the time the model computes an anomaly score $y' \in \bbR$ for $\bx'$ and decide the label of $\bx'$ by thresholding $y'$. 

Anomaly detection essentially depends on the characterization of the distribution of normal samples. Adversarial training, which is designed to learn data distributions, suits the task well. Models based on adversarial training usually have an encoder-decoder as the generator and a classifier as the discriminator. Previous models such as f-AnoGAN, EGBAD, GANomaly, and Skip-GANomaly all share this architecture. It is important to have an encoder-decoder as the generator because a low-dimensional encoding of a new sample is often needed. Below is a review of these models.

The generator consists of an encoder $G_e(\cdot; \phi):\bbR^{d} \rightarrow \bbR^{d'}$, which is parameterized by $\phi$,  and a decoder $G_d(\cdot; \psi):\bbR^{d'} \rightarrow \bbR^{d}$, which is parameterized by $\psi$. The encoder maps a sample $\bx$ to an \textit{encoding vector} $\bz$ while the decoder computes a reconstruction $\tilde{\bx}$ of the sample from $\bz$. 
\begin{align}
\bz =  G_e(\bx; \phi), ~~  \tilde{\bx} =  G_d(\bz; \psi).
\end{align}
Skip-GANomaly uses U-Net with skip connections as the encoder-decoder, which does not compute $\bz$ explicitly.

The discriminator $D(\cdot; \gamma)$, which is parameterized by $\gamma$, takes a sample and predict the probability of the sample being from the data $\bX$ instead of the generator. In the context of WGAN, the critic function play a similar role as a discriminator, so we also denote it as $D(\cdot; \gamma)$ and call it ``discriminator'' for easy discussion. The discriminator $D(\cdot; \gamma)$ should give higher values to normal data samples than reconstructions. The discriminator from vanilla GAN and a WGAN has the form of $u =  D(\bx; \gamma)$.  In a EGBAD model, which is based on a BiGAN, the discriminator takes both a sample and its encoding as the input, so it has the from of $u = D((\bx, G_e(\bz)); \gamma)$.

Since a model consists of an encoder-decoder and a discriminator, the training often considers losses inherited from both models. The \textit{adversarial loss} is from GAN training. The losses are defined as follows when the discriminator works in three GAN types (vanilla GAN, WGAN, and BiGAN). 
\begin{align}
L_{a{\text -}g}(\bx)  \hspace{0.5em} &= \hspace{0.2em} \log D(\bx) + \log \big(1 -  D(G_d(G_e(\bx)))\big) \label{eq:gan-loss}\\
L_{a{\text -}wg}(\bx) \hspace{0em  } &= \hspace{0.2em} D(\bx) - D\big(G_d(\tilde{\bz})\big)  \label{eq:wgan-loss}\\
L_{a{\text -}bg}(\bx) \hspace{0.2em} &= \hspace{0.2em} \log D(\bx, G_e(\bx)) + \log \big(1 -  D(G_d(\tilde{\bz}), \tilde{\bz})\big) \label{eq:bigan-loss}
\end{align}
In $L_{a{\text -}g}(\bx)$, the sample from the generator is the reconstruction of the original sample. The WGAN objective will be used by f-AnoGAN, which does not train the generator $G_e(\cdot; \phi)$ in the objective, so $L_{a{\text -}wg}(\bx)$ has no relation with $\phi$. Here we assume that $\tilde{\bz}$ is sampled from a prior distribution $p(\bz)$, e.g. multivariate Gausian distribution, so $L_{a{\text -}wg}(\bx)$ and $L_{a{\text -}bg}(\bx)$ are considered as stochastic functions of $\bx$.  

The second type of loss is the \textit{reconstruction loss}, which is often used in training encoder-decoders. The loss is the difference between a reconstruction and the original sample. The difference is measured by $\ell$-norm with either $\ell=1$ or $\ell=2$.
\begin{align}
L_{r}(\bx) = \|\bx - G_{d}\big(G_e(\bx)\big)\|_{\ell}^{\ell}
\end{align}

Previous research also indicates that the \textit{hidden vector} $\bh$ of a sample in the last hidden layer of the discriminator $D(\cdot; \gamma)$ is often useful to differentiate normal and abnormal samples. 
Denote $\bh = f_{D}(\bx; \gamma)$ as the hidden vector of $\bx$ in $D(\bx; \gamma)$, then the \textit{discriminative loss} based on $\bh$ is  
\begin{align}
L_{d}(\bx) = \| f_{D}(\bx) -  f_{D}\big(G_{d}(G_e(\bx))\big)\|_{\ell}^{\ell} \label{eq:loss-d}
\end{align}

The GANomaly model also considers the difference between the encoding vectors of a normal sample $\bx$ and its reconstruction $\tilde{\bx}$. Particularly, it uses a separate encoder $G_e(\cdot; \tilde{\phi})$ to encode the recovery $\tilde{\bx}$. Then the \textit{encoding loss} is  
\begin{align}
L_{e}(\bx) =  \| G_{e}(\bx; \phi) -  G_e\big(G_{d}(G_e(\bx; \phi)); \tilde{\phi}\big)\|_{\ell}^{\ell}
\end{align}
We explicitly indicates that the parameters $\phi$ and $\tilde{\phi}$ of the two encoders are different. 

To train their discriminators, these GAN models need to maximize the adversarial loss. 
\begin{align}
\max_{\gamma} \sum_{i=1}^N L_{a}(\bx_i; \phi, \psi, \gamma) 
\end{align}
Here $L_a$ can be any of the three losses in  \eqref{eq:gan-loss}, \eqref{eq:wgan-loss}, and \eqref{eq:bigan-loss}. Here we explicitly show all trainable parameters in the loss function. %F-AnoGAN uses $L_{a{\text -}wg}$ to train a WGAN while EGBAD uses $L_{a{\text -}bg}(\cdot)$ to train a BiGAN. GANomaly and Skip-GANomaly uses $L_{a{\text -}g}$ to train a standard GAN model. The discriminator of Skip-GANomaly also receives gradient from the minimization of the loss $L_{d}$. 

To train their generators, different models minimize some of the four losses mentioned above. We write their objective in a general form.  
\begin{multline}
\min_{\phi, \psi, \tilde{\phi}} \sum_{i=1}^N \alpha_1 L_{a}(\bx_i; \phi, \psi, \gamma) +  \alpha_2 L_{r}(\bx_i; \phi, \psi) \\ + \alpha_3 L_{e}(\bx_i; \phi, \psi, \tilde{\phi}) +  \alpha_4 L_{d}(\bx_i; \phi, \psi, \gamma)
\end{multline}
F-AnoGAN \citep{f-Anogan} trains the decoder and encoder separately. It first trains the decoder $G_d(\cdot; \psi)$ and the discriminator $D(\cdot; \gamma)$ by setting $\alpha_2 = \alpha_3 = \alpha_4 = 0$. Then it trains the encoder with the decoder and the discriminator fixed and $\alpha_1=\alpha_3=0$. 

After training a model, we need to compute an anomaly score $A(\bx')$ for a test instance $\bx'$. The anomaly score is usually a weighted sum of the reconstruction loss and the discriminative loss.    
\begin{align}
A(\bx') = L_{r}(\bx') + \beta L_{d}(\bx')
\label{eq:anomaly-score}
\end{align}
The relative weight $\beta$ is usually empirically selected.  The exception is GANomaly, which uses the encoding loss $A(\bx')= L_{e}(\bx')$ as the anomaly score. The higher an anomaly score is, the more likely $\bx'$ is an anomaly.

\section{Proposed Approach}

% Note: this sentence belongs to the introduction section  
%As it is pointed out by \cite{theis2015note}, GAN potentially do not describe the whole data distribution, which in this scenario misclassifying normal sample into abnormal sample is likely to happen. To address this,  we investigate the usage of multiple GANs and evaluate the performance gain. 

In this work, we propose an ensemble learning framework for anomaly detection based on GAN models. We will have multiple generators and discriminators, which are parameterized differently.  We define $I$ generators, $\mathcal{G}=\{\left(G_e(\cdot; \phi_i), G_d(\cdot; \psi_i)\right): i = 1, \ldots, I\}$, and $J$ discriminators $\mathcal{D}=\{D(\cdot; \gamma_j): j = 1, \ldots, J\}$. A single generator or discriminator is the same as that of a base model. GANomaly has two encoders, so its generators are $\mathcal{G}=\left\{\left(G_e(\cdot; \phi_i), G_d(\cdot; \psi_i); G_e(\cdot; \tilde{\phi}_i)\right): i = 1, \ldots, I \right\}$ in this framework. We will omit $G_e(\cdot; \tilde{\phi}_i)$ in the following discussion, but the optimization with GANomaly should be clear from the context.  

In the adversarial training process,  we pair up every generator with every discriminator. Then a generator is critiqued by every discriminator, and a discriminator receives synthetic samples from every generator. In our method, we do not use weights for generators or discriminators as \citep{arora2017generalization} since we want to sufficiently train every generator and every discriminator. 

With multiple pairs of generators and discriminators, the adversarial loss and the discriminative loss are both computed from all generator-discriminator pairs. Denote the losses between each generator-discriminator pair $(i,j)$ by
\begin{align}
L^{ij}_{a} = L_{a}(\bx; \phi_i, \psi_i, \gamma_j), ~~ L^{ij}_{d} = L_{d}(\bx; \phi_i, \psi_i, \gamma_j).
\end{align}
Here $L_{a}(\bx; \phi_i, \psi_i, \gamma_j)$ can be any of the three adversarial loss functions defined in \eqref{eq:gan-loss}, \eqref{eq:wgan-loss}, and \eqref{eq:bigan-loss}. We explicitly write out parameters of the pair of generator and discriminator to show the actual calculation.  

Similarly denote the recovery loss and the encoding loss of a single generator $i$ by
\begin{align}
L_{r}^{i} =  L_{r}(\bx; \phi_i, \psi_i), ~~L_{e}^{i} = L_{e}(\bx; \phi_i, \psi_i). 
\end{align}

Then we maximize the sum of adversarial losses to train discriminators while minimize the sum of all losses to train generators. The training objectives are as follows. 
\begin{align}
\max_{(\gamma_j)_{j = 1}^J} & \sum_{i=1}^I \sum_{j=1}^{J} L_{a}^{ij} \label{eq:dis-obj}\\
\min_{(\phi_i, \psi_i)_{i=1}^{I}} & \sum_{i=1}^{I}\sum_{j=1}^{J} \alpha_1 L^{(ij)}_{a} + \alpha_2 L^{(i)}_{r} +  \alpha_3  L^{(ij)}_{d}  + \alpha_4 L^{(i)}_{e}  \label{eq:enc-obj}
\end{align}
This ensemble method works with all base models reviewed in the last section. The training of the ensemble is shown in Figure \ref{fig:framework}. 

\subsubsection{Batch training} In one training iteration we update only one generator-discriminator pair instead of all generators and discriminators. Particularly, we randomly pick a generator and a discriminator and compute the loss with a random batch of training data. With the spirit of stochastic optimization, we still optimize the objectives in \eqref{eq:dis-obj} and \eqref{eq:enc-obj}. The training algorithm is shown in Algorithm \ref{alg:main}. Note that the training does not take as much as $(IJ)$ times of the training time of a base model -- it is much faster than that. This is because a generator is updated once in $I$ iterations on average, and it is similar for a discriminator. In actual implementations, small $I$ and $J$ values (e.g. $I=J=3$) often provide significant performance improvement. The training of the model is shown in Algorithm \ref{alg:main}. 

\begin{figure}[t]
\begin{center}
\includegraphics[width=0.38\textwidth]{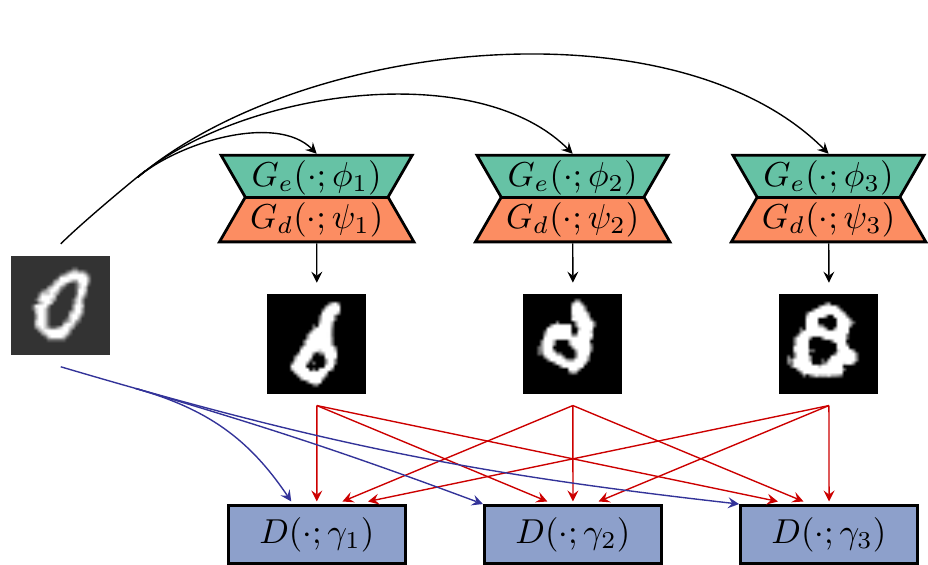}
\end{center}
\caption{GAN ensemble with multiple generators and discriminators. The ``0'' image on the left is an anomaly, and the three images from the three encoder-decoders are reconstructions of the ``0'' image.}
\label{fig:framework}
\end{figure}

\subsubsection{Anomaly Score}
The anomaly score of the ensemble $\calA(\bx')$ for a new instance $\bx'$ is the average of anomaly scores from multiple generators and discriminators. 
\begin{equation}
 \calA(\bx')=\frac{1}{IJ}\sum_{i=1}^I\sum_{j=1}^J A(\bx'; \phi_i, \psi_i, \gamma_j). 
 \label{eq:anomaly-score-en}
\end{equation}
The average of anomaly scores helps to smooth out spurious scores if a model is not well trained at a specific test instance.

\begin{algorithm}[t]
	\caption{GAN ensemble for anomaly detection}
    \label{alg:main}
	\textbf{Input:} Training set $\bX=\{\bx_i\}_{i=1}^{N}$ \\
    \textbf{Output:} Trained generators $\{(G_e(\cdot;\phi_i), G_d(\cdot;\psi_i)\}_{i=1}^{I}$ and discriminators $\{D(\cdot; \gamma_j)\}_{j=1}^{J}$
	\begin{algorithmic}[1]
	\State Initialize parameters for$(\phi_i, \psi_i)_{i=1}^{I}$ an $(\gamma_j)_{j=1}^{J}$ 
	\State $t\leftarrow 0$
	\While {the objective not converge and  $t < \mbox{max\_iter}$}
	    \State Sample $i$ from \{1, \ldots, I\} and $j$ from $\{1, \ldots, J\}$
		\State Sample a minibatch $\bX^t$ from $\bX$
		\State Compute the adversearial loss $L_a^{(ij)}$
        \State Update $D(\cdot; \gamma_j)$: $\gamma_j \leftarrow \gamma_j + \nabla_{\gamma_j} L_a^{(ij)}$
		\State $\small \calL^{(ij)} = \alpha_1 L^{(ij)}_{a} + \alpha_2 L^{(i)}_{r} +  \alpha_3  L^{(ij)}_{d}  + \alpha_4 L^{(i)}_{e}$
        \State Update $G_e(\cdot;\phi_i)$: $\phi_i \leftarrow \phi_i - \nabla_{\phi_i} \calL^{(ij)}$
        \State Update $G_d(\cdot;\psi_i)$: $\psi_i \leftarrow \psi_i - \nabla_{\psi_i} \calL^{(ij)}$
		\State $t\leftarrow t+1$
	\EndWhile
	\end{algorithmic}
\end{algorithm}
\section{Analysis}
\label{sect:analysis}
In this section we analyze GANs and GAN ensembles in the context of anomaly detection. The discriminator plays an important the role in anomaly detection. By the GAN training objective, a discriminator needs to give small values to synthetic samples that are far from real data. Some previous work analyzes the GAN equilibrium using a distribution of real data and then have the conclusion that the discriminator does no better than random guess at the equilibrium. In practice, a GAN essentially approximates discrete data distribution with a continuous generative distribution. The discriminator at convergence actually give large values to training samples and small values to samples that are far away.  

We first analyze the Wasserstein GAN \citep{arjovsky2017wasserstein, zhou2019lipschitz}, and the results directly apply to f-AnoGAN. We consider the case that the critic function $D(\cdot)$ (also called as discriminator in the discussion above) is only restricted by the 1-Lipschitz condition.  Given a generator $G_d(\cdot)$, the maximization objective of $D(\cdot)$ is \citep{arjovsky2017wasserstein}   
\begin{align}
\max_{\|D(\cdot)\|_{L\le 1}} \E{\bx \sim \mathrm{unif}(\mathrm{\bX})}{D(\bx)} - \E{\bx \sim G_d(\cdot)}{D(\bx)} \label{eq:opt-d}
\end{align}
Here we directly consider training samples in a uniform distribution because the actual training use these samples instead of an imaginary real distribution. 

To define the Lipschitz continuity, we assume there is a norm $\| \cdot \|$ defined for the sample space $\bbR^d$. $D(\cdot)$ is restricted to be 1-Lipschitz with respect to this norm. The following theorem shows that the function $D(\cdot)$ is decided by it function values $\{D(\bx_i): i = 1,\ldots, N\}$ on real samples $\bX$ and the support $\calS$ of the generator $G_d(\cdot)$. 
\begin{theorem} Assume the generator $G_d(\cdot)$ defines a continuous distribution that has positive density on its support $\calS$. Suppose the optimizer $D^*(\cdot)$ of \eqref{eq:opt-d} takes values $(D^*(\bx_i): i = 1, \ldots, N)$ on training samples $\bX$, then it must have the following form,  
\begin{align}
D^*(\bx) = \max_{i} D^*(\bx_i) - \| \bx - \bx_i \|, ~~~~~~ \forall \bx \in \calS. \label{eq:d-form}
\end{align}
Furthermore, the value of $D^*(\bx)$ for $\bx \notin (\calS \cup \bX)$ does not affect the objective \eqref{eq:opt-d}.
\end{theorem}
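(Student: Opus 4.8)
The plan is to exploit the fact that the objective in \eqref{eq:opt-d} decouples into a term that sees $D$ only on the training set and a term that sees $D$ only on $\calS$. Writing the generator density as $p_g$ (positive on $\calS$), the objective equals $\frac{1}{N}\sum_{i=1}^N D(\bx_i) - \int_{\calS} D(\bx)\, p_g(\bx)\, d\bx$. First I would freeze the optimal training-point values $c_i := D^*(\bx_i)$, which are given in the statement, so that the first term becomes a fixed constant. Then, among all $1$-Lipschitz functions agreeing with $D^*$ at every $\bx_i$, the optimizer must be one that minimizes $\int_\calS D\, p_g$; otherwise we could substitute a competitor with a smaller integral and strictly increase the objective, contradicting optimality of $D^*$.

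Next I would produce a pointwise lower envelope for every such competitor. The $1$-Lipschitz condition with respect to $\|\cdot\|$ gives $D(\bx) \ge D(\bx_i) - \|\bx - \bx_i\| = c_i - \|\bx - \bx_i\|$ for each $i$, hence $D(\bx) \ge \underline{D}(\bx) := \max_i \big( c_i - \|\bx - \bx_i\| \big)$ for all $\bx$. Two facts make $\underline{D}$ the right candidate for \eqref{eq:d-form}: it is itself $1$-Lipschitz, being a maximum of finitely many $1$-Lipschitz cones $\bx \mapsto c_i - \|\bx - \bx_i\|$; and it reproduces the frozen values, $\underline{D}(\bx_j) = c_j$. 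The latter is where feasibility of $D^*$ is used: because $D^*$ is $1$-Lipschitz it already satisfies the pairwise bounds $c_i - c_j \le \|\bx_i - \bx_j\|$, so at $\bx_j$ the index $i = j$ attains the maximum. Thus $\underline{D}$ is a legitimate competitor with the same first-term value as $D^*$.

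Finally I would close the argument by optimality. Since $\underline{D} \le D^*$ pointwise on $\calS$ and $p_g > 0$ there, we get $\int_\calS \underline{D}\, p_g \le \int_\calS D^*\, p_g$, so $\underline{D}$ achieves an objective at least as large as $D^*$. Optimality of $D^*$ forces equality, whence $\int_\calS (D^* - \underline{D})\, p_g = 0$ with a nonnegative integrand against a strictly positive density; this gives $D^* = \underline{D}$ almost everywhere on $\calS$, and continuity of both functions (each is $1$-Lipschitz) upgrades this to equality everywhere on $\calS$, which is exactly \eqref{eq:d-form}. The last assertion is then immediate: points $\bx \notin (\calS \cup \bX)$ enter neither the sum nor the integral, so their $D$-values are invisible to the objective (and any choice extends to a $1$-Lipschitz function, for instance the lower envelope itself).

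The main obstacle is conceptual rather than computational: recognizing the decoupling that lets me fix the training-point values and thereby reduce a constrained variational problem to minimizing $D$ pointwise over $\calS$. The one technical point to handle with care is the passage from almost-everywhere equality to equality on all of $\calS$, which relies on the continuity supplied by the Lipschitz constraint, together with the verification that the inherited pairwise constraints on the $c_i$ make the envelope $\underline{D}$ consistent at the training points.
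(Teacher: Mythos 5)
Your proof is correct and follows essentially the same route as the paper's: both arguments observe that the $1$-Lipschitz constraint forces $D(\bx) \ge \max_i \big(D^*(\bx_i) - \|\bx - \bx_i\|\big)$ on $\calS$, verify that this lower envelope is itself a feasible $1$-Lipschitz competitor agreeing with $D^*$ on $\bX$, and conclude by optimality that $D^*$ must coincide with it, with points outside $\calS \cup \bX$ being invisible to the objective. Your write-up is in fact somewhat more careful than the paper's, making explicit the positive-density argument that turns ``$D$ should be as small as possible on $\calS$'' into equality with the envelope, and the passage from almost-everywhere to everywhere via continuity.
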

\begin{proof}
We first show that $D^*(\bx)$ achieves the smallest value on any sample in $\calS - \bX$ under the 1-Lipschitz constraint. Let $\bx' \in \calS - \bX$ and $i' =  \argmax_{i} D^*(\bx_i) - \| \bx' - \bx_i \|$, then $D^*(\bx') = D^*(\bx_{i'}) - \| \bx' - \bx_{i'} \|$. $D^*(\bx')$ cannot take any value smaller than that; otherwise it violates the 1-Lipschitz constraint. 

Then we show that $D^*(\cdot)$ itself is 1-Lipschitz. Suppose we have another sample $\bx'' \in \calS - \bX$ and $i'' = \argmax_{i} D^*(\bx_i) - \| \bx'' - \bx_i \|$, then $D^*(\bx'') = D^*(\bx_{i''}) - \| \bx'' - \bx_{i''} \| \ge f(\bx_{i'}) - \|\bx_{i'} - \bx''\|$. We have $D^*(\bx') - D^*(\bx'') \le - \|\bx_{i'} - \bx'\| + \|\bx_{i'} - \bx''\| \le \|\bx' - \bx''\|$ by the triangle inequality. We also have  $D^*(\bx') - D^*(\bx'') \ge  - \|\bx' - \bx''\|$ by switching  $\bx'$ and $\bx''$ and using the same argument. Therefore,   $D^*(\bx)$ is 1-Lipschitz.  

For $\bx \notin (\calS \cup \bX)$, $D^*(\bx)$ is not considered in the calculation in the objective, so it does not affect the objective.  
\end{proof}

\begin{figure}
\centering
\includegraphics[width=0.4\textwidth]{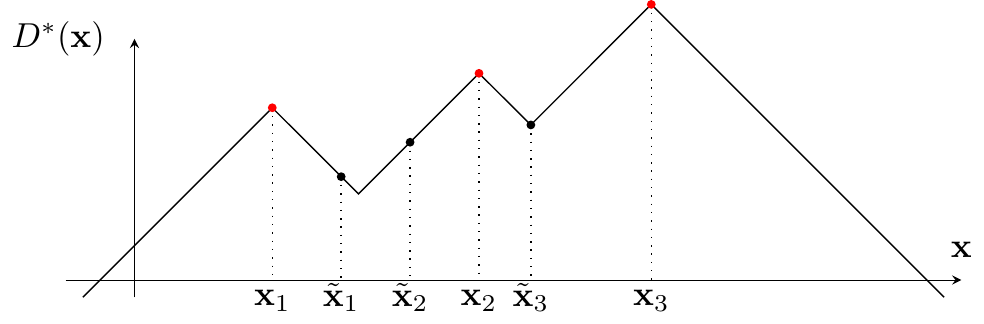}
\caption{An illustration of $D^*(\bx)$ in Theorem 1.}
\label{fig:discriminator}
\end{figure}
An illustration of the function $D^*(\bx)$ is shown in Figure \ref{fig:discriminator}. From the theorem, we form two views about why a discriminator helps to identify anomalies.

\subsubsection{Anomalies at the discriminator}
The theorem indicates that the discriminator should give small values to samples that are generated by the generator, particularly those far from training samples. In actual training, the generator often cannot define a distribution that matches the training data well. The generated samples that are very different from training samples become ''positive samples'' (e.g. shirts with asymmetrical sleeves) of anomalies for the discriminator, then the discriminator is trained to give small values to these samples and also anomalies. A similar principle should exist in other types of GANs. This explains why we can include discriminator outputs in anomaly scores \citep{EGBAD}. 

The hidden vector in the last layer of the discriminator is often very different for an anomaly and its reconstruction. The difference is also used to compute an anomaly score in \eqref{eq:loss-d} in f-AnoGAN, EGBAD, and Skip-GANomaly. We also have an explanation for this calculation. The reconstruction of an anomaly is often like a normal example because the unique property of an anomaly often cannot go through the encoder. In Figure \ref{fig:framework}, the sample ``0'' is the abnormal class, and its reconstructions are more like normal samples. Then the discriminator often gives very different values to an anomaly and its reconstruction. These values are just a linear transformations of hidden vectors in the last layer, so the hidden vector of an anomaly is different from that of its reconstruction. A normal sample and its reconstruction are often very similar, and their hidden vectors are also similar, so a normal sample has small discriminative loss. 

\begin{table*}[t]
\centering
\caption{AUROC results on the MNIST dataset. Ensemble methods generally outperform base models.}
\begin{tabular}{l|cccccccccc|c}
\hline
& 0 & 1 & 2 & 3 & 4 & 5 & 6 & 7 & 8 & 9 & average \\\hline
f-AnoGAN & 0.895 &0.699 & 0.863 & 0.785 & 0.781 & {0.761} & 0.896 & 0.702 & \underline{\textbf{0.889}} & 0.630 & 0.799 \\

EGBAD & 0.782 & \underline{0.298} & 0.663 & 0.511 & \underline{0.458} & 0.429 & 0.571 & 0.385 & 0.521 & 0.342 &0.496\\

GANomaly   & 0.881 & \underline{0.661} & \underline{\textbf{0.951}} & 0.796 & 0.809 & 0.868 & 0.859 & 0.671 & 0.653 & 0.533 & 0.673\\\hline
$\text{f-AnoGAN}^\textbf{en}$ & \underline{\textbf{0.961}} & \underline{\textbf{0.943}} & \underline{0.914} & \underline{\textbf{0.913}} & \underline{0.817} & \underline{0.767} & \underline{\textbf{0.957}} & \underline{0.782} & 0.830 & \underline{0.681} & \underline{\textbf{0.857}}\\

$\text{EGBAD}^\textbf{en}$ &  \underline{0.804}&  0.202& \underline{0.671} & \underline{0.577} & 0.438  &\underline{0.480}  &\underline{0.595}  & \underline{0.425} & \underline{0.595}  &\underline{0.458} & \underline{0.525}\\

$\text{GANomaly}^\textbf{en}$   & \underline{0.901} & 0.598 & 0.931 & \underline{0.883} & \underline{\textbf{0.838}} & \underline{\textbf{0.875}} & \underline{0.892} & \underline{\textbf{0.801}} & \underline{0.847}& \underline{\textbf{0.733}} & \underline{0.830}\\\hline
\end{tabular}
\label{table:mnist}
\end{table*}

\begin{table*}[t]
\caption{AUROC results on the CIFAR-10 dataset. Ensemble methods generally outperform base models.}
\centering
\begin{tabular}{l|cccccccccc|c}
\hline
& bird & car & cat & deer & dog & frog & horse & plane & ship & truck & average \\\hline
f-AnoGAN & 0.427 &0.778 & 0.541 & 0.442 & 0.601 & 0.582 & 0.628 & 0.688 & 0.637 & 0.781 & 0.611\\
EGBAD & 0.383 & 0.514 & 0.448 & 0.374 & \underline{0.481} & 0.353 & \underline{0.526} & \underline{0.577} & 0.413 & 0.555 & 0.462\\
GANomaly   & 0.510 & 0.631 & 0.587 & 0.593 & 0.628 & 0.683 & 0.605 & 0.633 & 0.710 & 0.617 & 0.620\\
Skip-GANomaly & 0.448& \underline{\textbf{0.953}}& 0.607& 0.602& 0.615& \underline{\textbf{0.931}}& 0.788& \underline{0.797}& 0.659& 0.907 &0.731\\\hline
$\text{f-AnoGAN}^\textbf{en}$ & \underline{0.531} & \underline{0.804} & \underline{0.581} & \underline{0.584} & \underline{0.616} & \underline{0.642} & \underline{0.653} & \underline{0.704} & \underline{0.830} & \underline{0.907} & \underline{0.685}\\
$\text{EGBAD}^\textbf{en}$& \underline{0.573} & \underline{0.620}  & \underline{0.451}  &\underline{0.563}  &0.388  &\underline{0.554}  & 0.429 & 0.522 & \underline{0.612} &\underline{0.668} &\underline{0.538}   \\
$\text{GANomaly}^\textbf{en}$  & \underline{0.533} & \underline{0.669} & \underline{0.599} & \underline{0.719} & \underline{0.667} & \underline{0.856} & \underline{0.614} & \underline{\textbf{0.948}} & \underline{0.854} & \underline{0.682} & \underline{0.714}\\
$\text{Skip-GANomaly}^\textbf{en}$ & \underline{\textbf{0.998}} & 0.917& \underline{\textbf{0.691}}& \underline{\textbf{0.766}}& \underline{\textbf{0.937}}& 0.764&\underline{\textbf{0.992}}& 0.703& \underline{\textbf{0.991}}& \underline{\textbf{0.917}}&\underline{\textbf{0.868}}\\\hline
\end{tabular}
\label{table:cifar}
\end{table*}

\subsubsection{Reconstruction guided by the discriminator} We have the form of ideal discriminator $D^*(\cdot)$ given the norm $\|\cdot\|$, but it is hard to find out the norm the discriminator uses. However, we can get some information about the norm by considering two similar samples, e.g.  $\bx_{i'}$ and  $\bx'$ when $\bx'$ is a good reconstruction of a normal sample $\bx_{i'}$. The training sample $\bx_{i'}$ is likely to be nearest to $\bx'$ among all training samples, and then $\| \bx' - \bx_{i'} \| = (D^*(\bx_{i'}) -  D^*(\bx'))$. The minimization of \eqref{eq:opt-d} with respect to the generator is essentially the minimization of $\| \bx' - \bx_{i'} \|$, which corresponds to the minimization of $\| \bx' - \bx_{i'} \|_2^2$ in an encoder-decoder. Therefore, a discriminator implicitly defines a norm over samples. The adversarial loss, which is like a reconstruction loss using this norm, guides the training of the encoder-decoder.  

\subsubsection{The benefit of GAN ensembles}
Previous research \citep{arora2017generalization} shows that multiple generators helps to capture data modes and provide synthetic samples with varieties. Multiple generators are also likely to have a larger joint support $\calS$ than a single one. Therefore, the generator ensemble trains a discriminator better than a single model. Our experiments later show that from the better training of the discriminator is very important.

\section{Experiments}

In this section, we evaluate the proposed method in several anomaly detection tasks. We also analyze the GAN ensemble with the experiment. 

We evaluate our method against baseline methods on four datasets: KDD99 \citep{Dua:2019} is a dataset for anomaly detection. OCT \citep{oct} has three classes with small number of samples, and these three classes are treated as abnormal classes. MNIST \citep{lecun-mnisthandwrittendigit-2010} and CIFAR-10 \citep{krizhevsky2009learning} are datasets for multiclass classification. We leave a class as the  abnormal class and use other classes as normal data. Among these four datasets, MNIST and CIFAR-10 contain low-resolution images, OCT has high-resolution images from clinical applications, and KDD99 is a non-image dataset.  

We consider four base models in our proposed framework: f-AnoGAN \citep{f-Anogan}, EGBAD \citep{EGBAD}, GANomaly \citep{akcay2018ganomaly} and Skip-GANomaly \cite{akccay2019skip}. Then we have four types of ensembles, and we compare these ensembles against base models. All the experiments are conducted with three $I=3$ generators and three $J=3$ discriminators. $I$ and $J$ are chosen for a good tradeoff between running time and performance. The appendix has more details about datasets and model training.

\subsection{Comparisons of detection performances}

\subsubsection{MNIST dataset}
MNIST has 10 classes of hand-written digits from 0 to 9. Every time we treat one class as anomalous and use other classes as normal data. We compare three ensemble methods and three base models. We use the AUROC score as the evaluation measure. The performance values of EGBAD and GANomaly are taken from the original papers.When an ensemble and its base model are compared, the better result is underlined. The best result across all models is shown in bold.

Table \ref{table:mnist} reports performances of different methods. 
Ensemble methods are indicated by the superscript ${\textbf{en}}$. In general, ensemble methods have superior performance over their corresponding base models on most of the 10 classes. For classes 1, 3, 7, and 9, the best ensemble method improves more than 0.1 in AUROC over baseline methods. On average, the ensembles of f-AnoGAN and GANomaly improves the performances of their respective base models by 0.07 and 0.23. We do not include Skip-GANomaly because this base model performs poorly on the MNIST dataset. 
% All models were trained only with normal data and tested with both normal and anomalous data. Table \ref{table:mnist} shows the clear superiority of the ensemble approach over the corresponding baseline models. In f-AnoGAN, EGBAD, GANomaly, applying ensemble achieves higher AUROC performance for 8 out of 10 digits chosen as anomalies. Especially, f-AnoGAN and GANomaly gain substantial improvement over most of the classes, while the EGBAD approach benefits less.

\subsubsection{CIFAR-10 dataset}
% In a similar vein to the MNIST dataset, we created 10 different dataset from the CIFAR-10 dataset. For each synthesized dataset, we have 45,000 normal training samples in total from 9 classes, and 15,000 test samples from 10 classes. The test set contains 9,000 normal samples and 6,000 anomalous samples.
% We employ the same strategy as MNIST to create 10 different datasets within the use of CIFAR-10 dataset \citep{krizhevsky2009learning}. Each synthesized dataset contains 45,000 normal training samples from 9 classes, and 15,000 test samples, of which has 9,000 normal samples and 6,000 anomalous samples. 
CIFAR-10 has 10 classes of images of size $32 \times 32$. Every time one class is treated  as anomalous, and other classes are used as normal data. All four base models and their ensembles are compared and evaluated by AUROC scores. The performance values of EGBAD, GANomaly, and Skip-GANomaly are taken from the original papers. %When an ensemble and its base model are compared, the better result is underlined. The best result across all models is shown in bold.

We report AUROC scores of all models in table \ref{table:cifar}. Ensembles constructed from f-AnoGAN and GANomaly outperform their corresponding base models on all 10 classes. The improvements from our ensemble method for EGBAD and Skip-GANomaly are also apparent in the results. The ensemble of Skip-GANomaly models, which performs the best on 7 out of 10 classes, is generally the best model for this dataset. 

\begin{table}[t]
\caption{AUROC results on the OCT dataset. Ensemble methods generally outperform base models.}
\label{table:oct}
\centering
\begin{tabular}
{l|ccc|c}
\hline
Method & CNV & DME & DRUSEN & overall\\\hline
f-AnoGAN &0.863 & 0.754 & 0.610 & 0.666\\
EGBAD &0.793 & 0.782 & 0.601 & 0.610\\
GANomaly &0.852 & 0.796 & 0.543 & 0.637\\
Skip-GANomaly &0.915 & \textbf{\underline{0.929}}& 0.609 & 0.766\\\hline
$\text{f-AnoGAN}^\textbf{en}$ &\underline{0.949} & \underline{0.866} & \underline{0.705} & \underline{0.800}\\
$\text{EGBAD}^\textbf{en}$ &\underline{0.889}&\underline{0.821}&\underline{0.611}&\underline{0.705}\\
$\text{GANomaly}^\textbf{en}$ &\underline{0.911} & \underline{0.845} & \underline{0.626} & \underline{0.741}\\
$\text{Skip-GANomaly}^\textbf{en}$ &\underline{\textbf{ 0.985}} & 0.889 & \underline{\textbf{0.952}} & \underline{\textbf{0.869}}\\
\hline
\end{tabular}
\end{table}

\begin{figure}[t]
\centering
\includegraphics[width=0.4\textwidth]{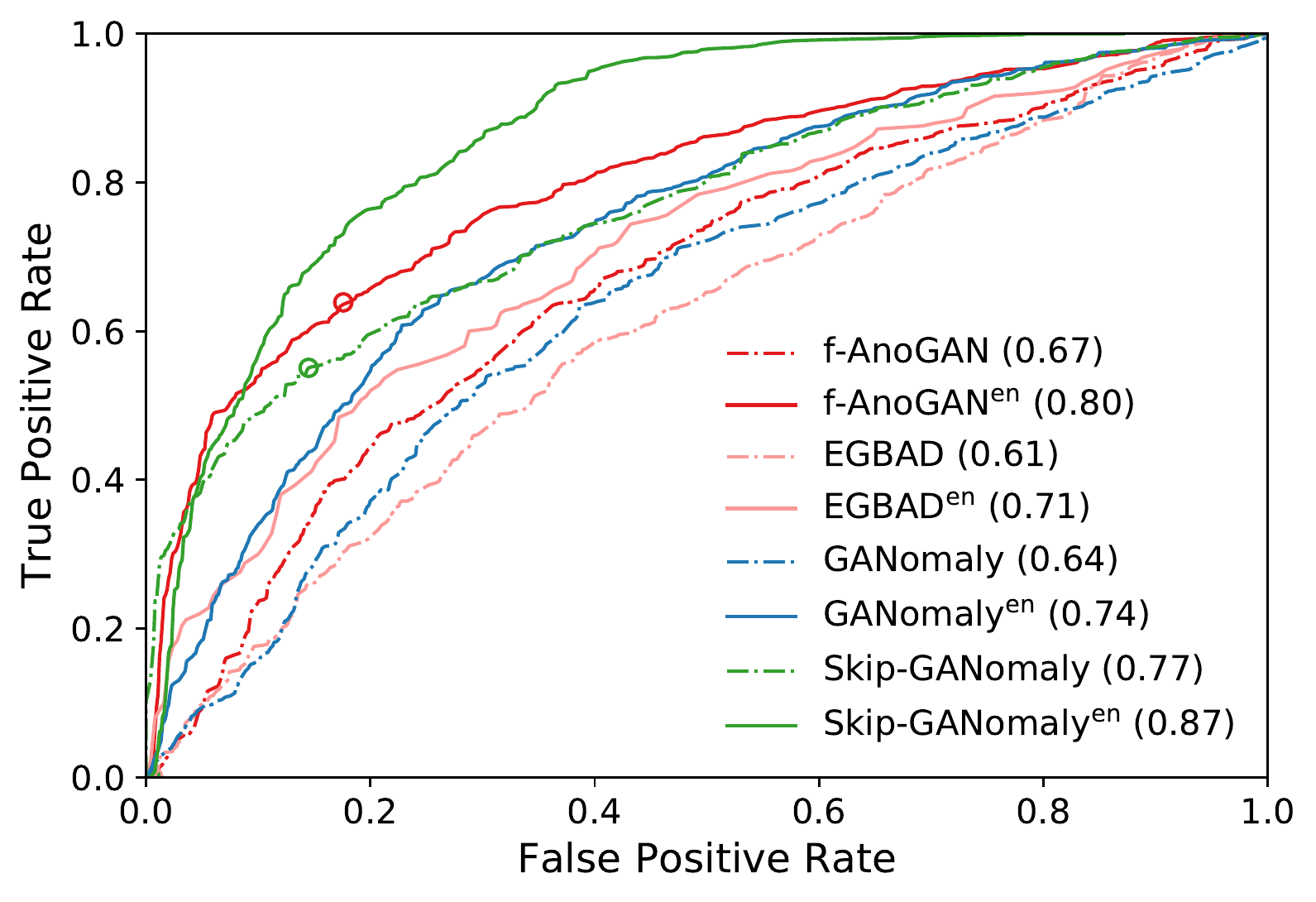}
\caption{ROC curves of different models on the OCT dataset with ``overall'' being the anomlous class.}
\label{fig:oct_model_comparison}
\end{figure}

\subsubsection{OCT dataset} 
OCT contains  high-resolution clinical images. It has three small classes (CNV, DME, DRUSEN) and a large class of images. We use the large class as normal data; then, we use three small classes separately as anomalous classes and use the three classes together as a single anomalous class (denoted as ``overall'').  

Table \ref{table:oct} shows AUROC performances of different methods over four types of anomaly categories (CNV, DME, DRUSEN, and overall). The ensemble method's benefit is more obvious in this dataset: except Skip-GANomaly on the DME class, all ensemble models show significant performance improvements on all anomaly classes. Particularly on the DRUSEN class, the ensemble of Skip-GANomaly has a much better performance than all base models. Figure \ref{fig:oct_model_comparison} shows ROC Curves for all single models as well as ensemble models. This result indicates that an ensemble model can make reliable decisions in medical applications.

\begin{figure*}[t]
\centering
\begin{tabular}{ccc|ccc}
% \multicolumn{4}{c}{\Large{latent in discriminator(s)}} \\
\multicolumn{3}{c|}{\large{Base model}} & \multicolumn{3}{c}{\large{Ensemble}} \\
\multicolumn{1}{c}{\large{generator}} & \multicolumn{2}{c|}{\large{discriminator}} &\multicolumn{1}{c}{\large{generators}} & \multicolumn{2}{c}{\large{discriminators}}\\
\includegraphics[width=0.14\textwidth]{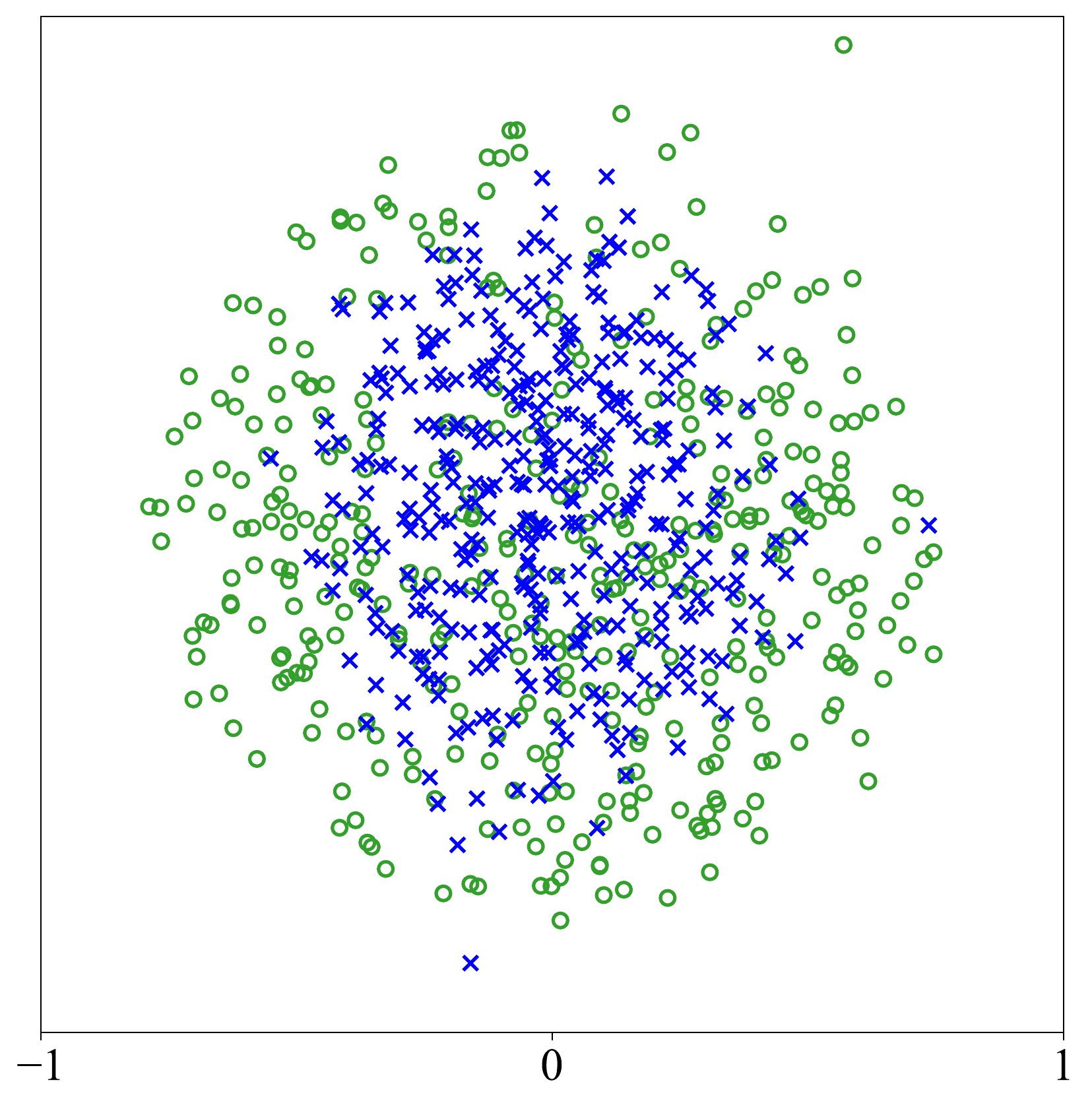}&
\includegraphics[width=0.14\textwidth]{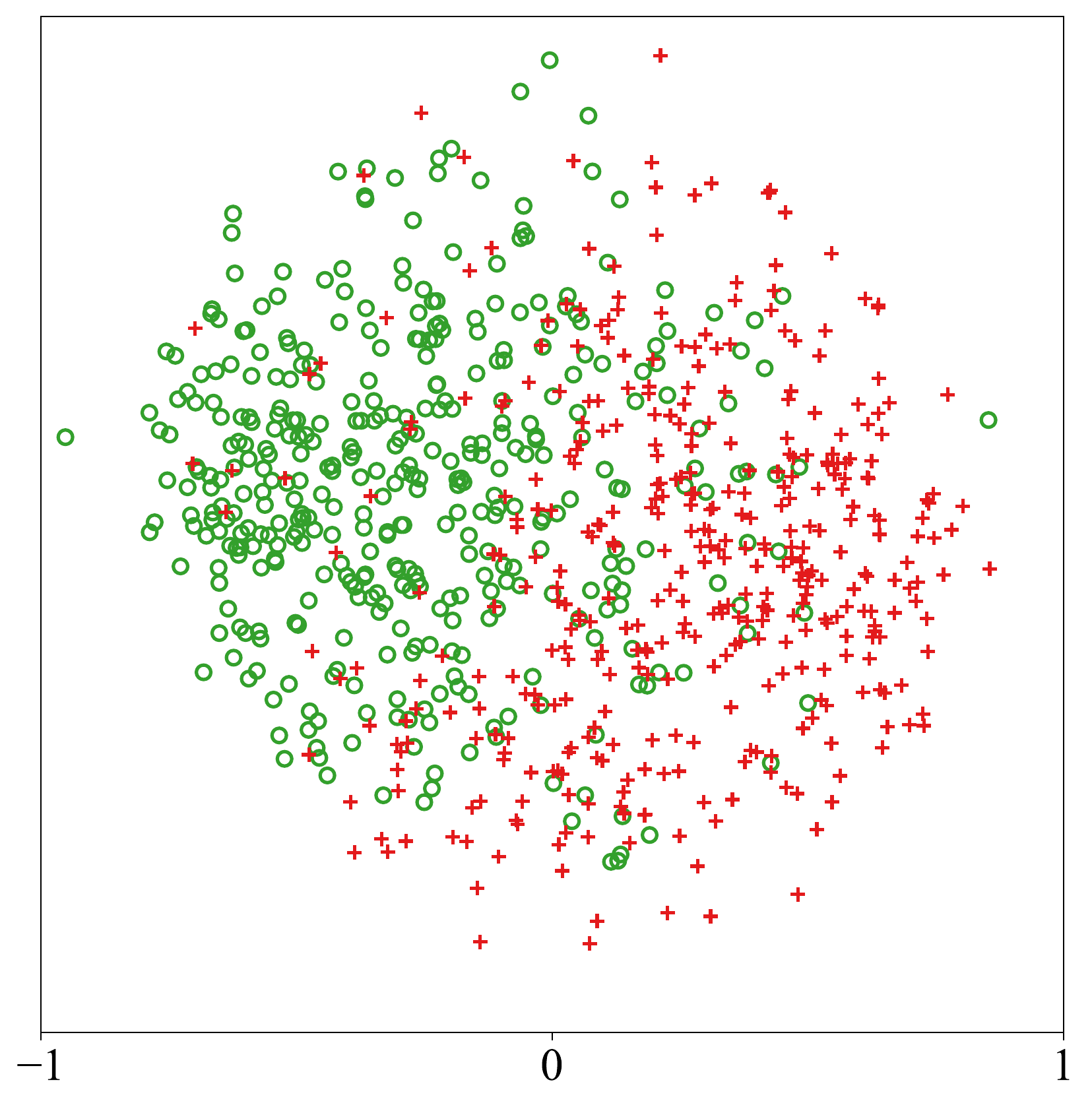} &
\includegraphics[width=0.14\textwidth]{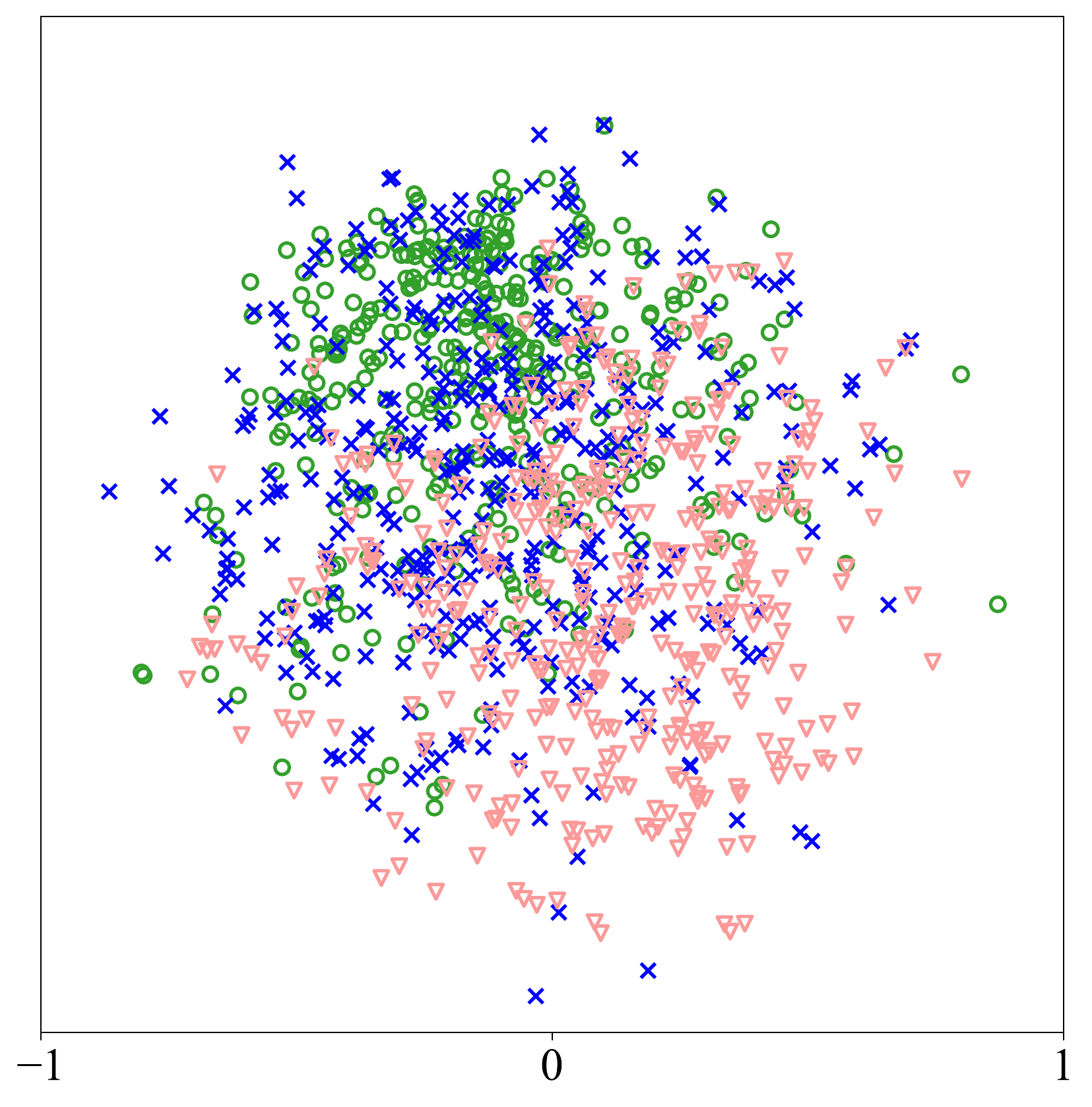} &
\includegraphics[width=0.14\textwidth]{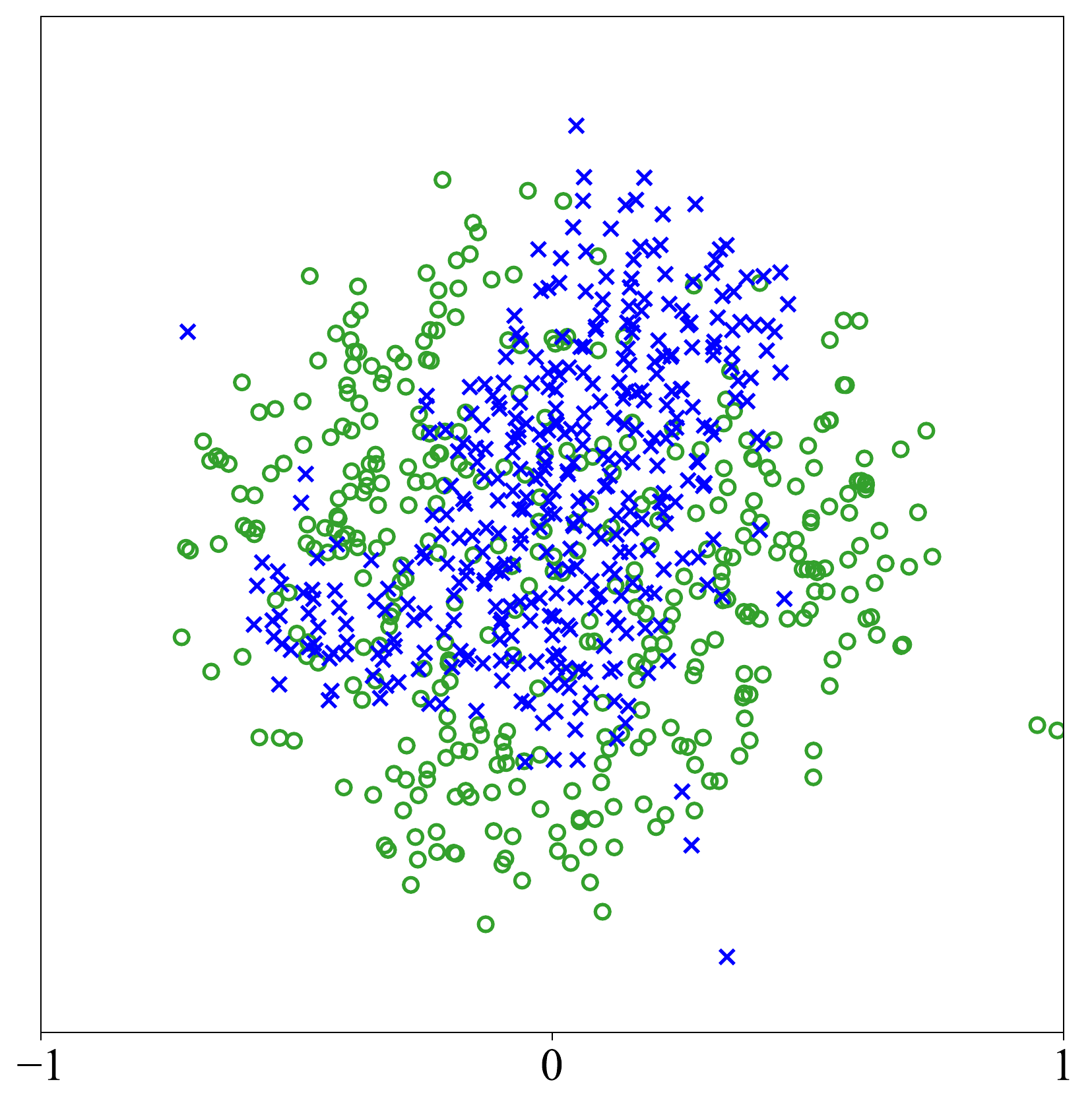}&
\includegraphics[width=0.14\textwidth]{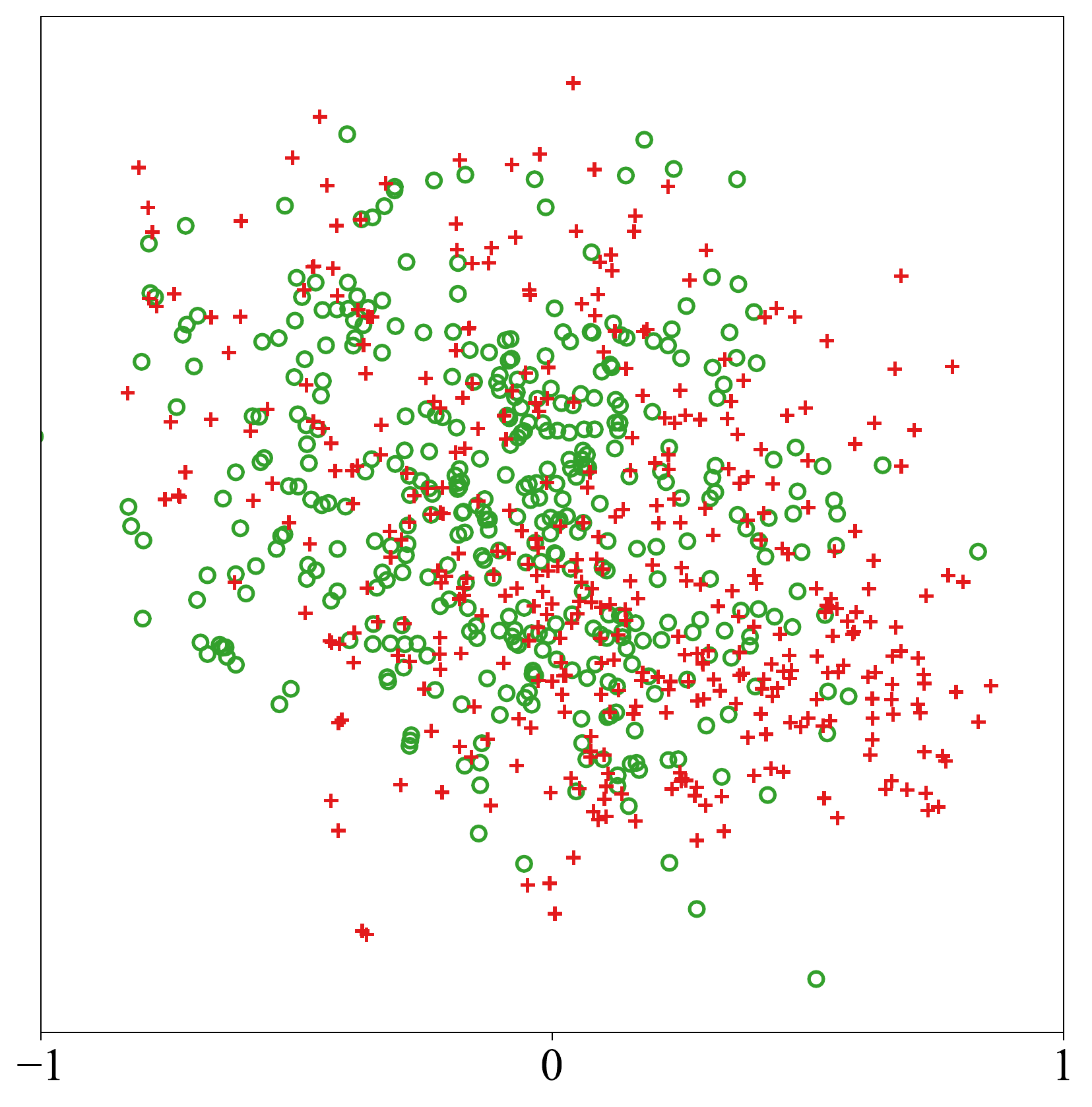} & 
\includegraphics[width=0.14\textwidth]{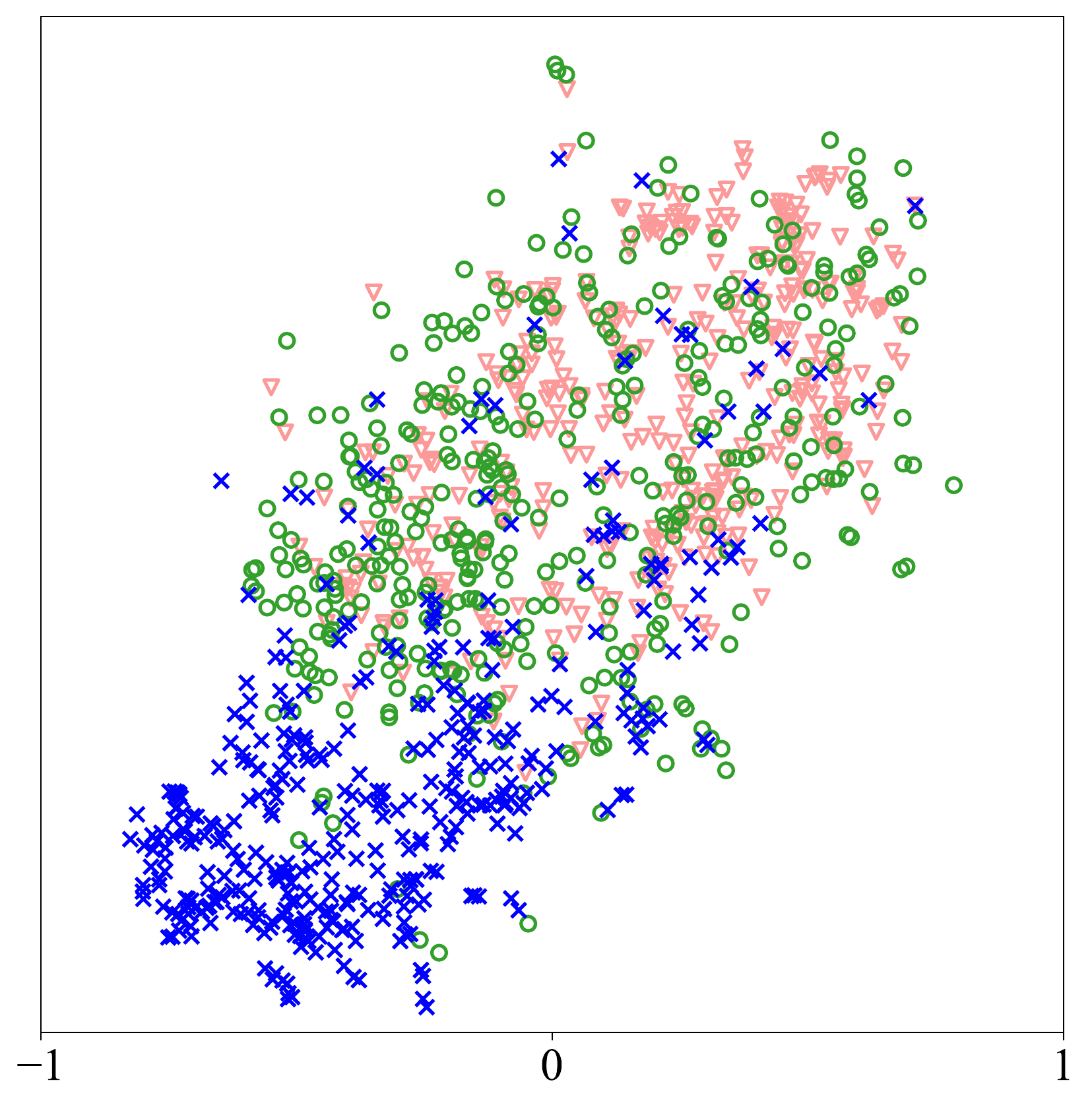}
\\
(a) & (b)  & (c)  & (d) & (e) 
& (f) \\
\multicolumn{6}{c}{\includegraphics[width=0.95\textwidth]{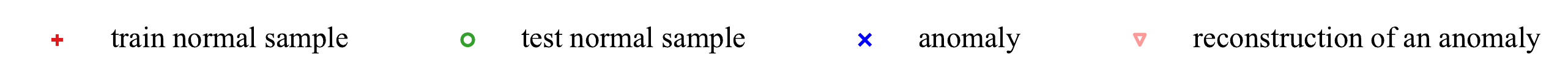}}
\end{tabular}
\caption{Latent analysis for discriminator and generator}
\label{fig:vector-anaylsis}
\end{figure*}

\begin{table}[t]
\caption{Performance comparison on the KDD99 dataset. Ensembles outperform base models and other baselines.}
\label{table:kdd99}
\centering
\begin{tabular}{l|c|c|c}
\hline
Method  & Precision & Recall & F1     \\
\hline
OC-SVM  & 0.746 & 0.8526 & 0.795 \\
DSEBM-r & 0.852 & 0.647 & 0.733\\
DSEBM-e & 0.862 & 0.645 & 0.740\\
DAGMM   & 0.929 & 0.944 & 0.937 \\\hline
EGBAD   & 0.920   & 0.958 & 0.939 \\
f-AnoGAN  & 0.935 & 0.986 & 0.960 \\\hline
$\text{EGBAD}^\textbf{en}$    &   \underline{\textbf{0.972}} &  \underline{0.960}& \underline{0.966} \\
$\text{f-AnoGAN}^\textbf{en}$  & \underline{0.967}   & \underline{\textbf{0.990}} & \underline{\textbf{0.979}} \\\hline
% Ours    & \textbf{0.9681}   & \textbf{0.9873} & \textbf{0.9776}
\end{tabular}
\end{table}
\subsubsection{KDD99 dataset} 
KDD99 is a benchmark dataset for anomaly detection. Each sample in it is a 122-dimensional vector. We compare ensembles of EGBAD and f-AnoGAN against two base models and four other baselines (OC-SVM \citep{scholkopf2001estimating} , DSEBM \citep{zhai2016deep}, DAGMM \citep{zong2018deep}). 
% TODO: add citations
We do not include GANomaly and Skip-GANomaly because they are designed for image data. Following prior work, we use precision, recall, and F1-score to evaluate methods in comparison. The performance values of EGBAD, f-AnoGAN, and other baselines are from their original papers, except that the performance values of OC-SVM are from \citep{zhai2016deep}. 

The results in table \ref{table:kdd99} show that the two ensemble models constructed from  f-AnoGAN  and EGBAD  outperform base models by all evaluation measures. The ensemble of f-AnoGAN outperforms all baseline methods.

\begin{figure}[t]
     \centering
     \includegraphics[width=0.4\textwidth]{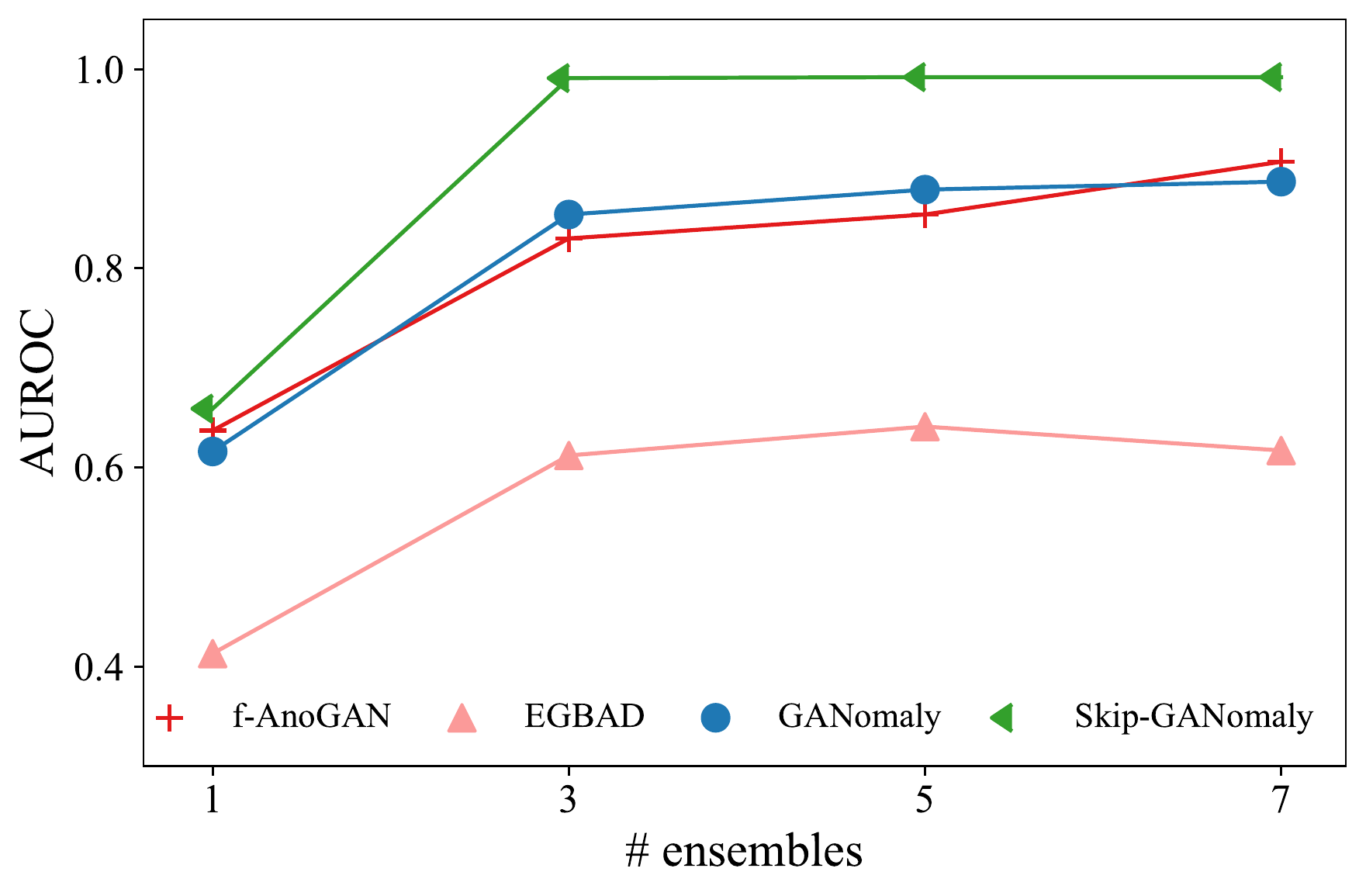}
     \caption{Difference detection performances with different ensemble sizes: $I = J$ $\in$ \{1,3,5,7\}. }
     \label{fig:sensitivity_analysis}
\end{figure}
% \subsubsection{KDD99} 
% To prove that ensemble framework not only can be applied to image data, but also has a good performance on high-dimensional, non-image data. To compare with prior work, we adopt precision, recall and F1-score to evaluate proposed method. \ref{table:kdd99} shows that both f-AnoGAN  and EGBAD can benefit from ensemble and outperform single model.

\subsection{Framework Analysis}
\subsubsection{Analysis of ensemble sizes}
In this experiment, we check how ensemble sizes affect performances. We vary the ensemble sizes in $\{1,3,5,7\}$ and keep the same number of generators and discriminators. We test these configurations on the CIFAR-10 dataset with \textit{ship} as the anomalous class. Figure \ref{fig:sensitivity_analysis} shows that there is a significant improvement from size 1 (a single model) to size 3. The average increase of AUROC is $35.9 \%$ across all models. However, the performance gain from size 3 to size 7 is often marginal. In our experiment with $I = J = 3$, the ensemble takes about 3 times of a base model's running time.. Note that every generator or discriminator in an ensemble is updated once in every 3 iterations. As a rule of thumb, ensembles of size 3 works the best for all tasks in our experiments considering both performance and running time.

\subsubsection{Analysis of encoding vectors and hidden vectors}

We analyze encoding vectors and hidden vectors of normal and abnormal samples to understand why an ensemble significantly improves the performance of a base model. Note that when a sample is present to a GAN, we get an encoding vector from the encoder and a hidden vector from the discriminator's last hidden layer. For an ensemble method, we take the average of encoding vectors and hidden vectors from multiple models. 

We first check encoding vectors. From (a) and (d) in Figure \ref{fig:vector-anaylsis}, we see that encoding vectors of normal samples and abnormal samples are mixed together. This is because the encoder does not encode the unique properties of abnormal samples because the encoder is trained to compress common patterns. Then it means that the reconstruction of abnormal samples will be like normal samples. 

We then check hidden vectors of normal samples from the discriminator. Figure \ref{fig:vector-anaylsis} (b) shows that training and testing samples get different representations in the discriminator in a base model. It means that the discriminator of a single model might overfit the training data. The ensemble seems to train better discriminators by checking (e), which shows that representations of training and test samples are very similar. 

We also check hidden vectors of abnormal samples and their reconstructions. From Figure \ref{fig:vector-anaylsis} (c), we see that the discriminator of a single model cannot distinguish test samples and abnormal samples, then the detection performance is not optimal. Figure \ref{fig:vector-anaylsis} (f) shows that hidden vectors of abnormal samples are far from test normal samples. At the same time, the reconstructions of abnormal samples are similar to normal samples, which is consistent with our observation in (d). Then the difference between an anomaly and its reconstruction is likely to be large in \eqref{eq:loss-d}, then the anomaly will have a large score. 

In summary, the experimental analysis verifies our theoretical analysis in \ref{sect:analysis}. It shows how the ensemble improves the training of discriminators and also the computation of anomaly scores.

We further verify that the ensemble mainly improves the second term of the calculating of anomaly scores in \eqref{eq:anomaly-score}. We vary the value of the relative weight $\beta$ of the reconstruction loss and the discriminative loss and check detection performances. Figure \ref{fig:contribution analysis} shows the results. As $\beta$ increases, the discriminative loss contributes larger fractions to anomaly scores, and the detection performance improves. It means the discriminative loss is important for anomaly detection. Compared with a single model, the ensemble benefits more from large $\beta$ values. It indicates that an ensemble mainly improves the discriminative loss over a base model, as shown in our analysis above.

\begin{figure}[t]
 \centering
 \includegraphics[width=0.4\textwidth]{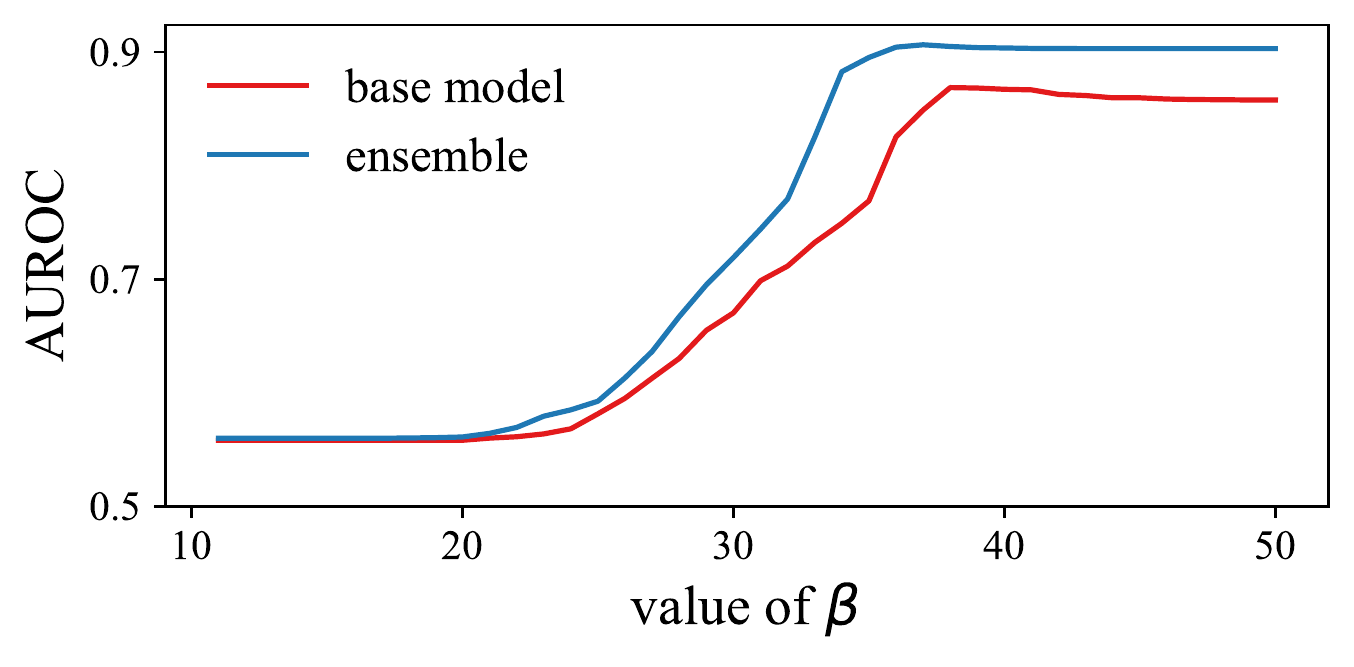}
 \caption{Different detection performances with different relative weight $\beta$ of discriminative loss in \eqref{eq:anomaly-score}.}
 \label{fig:contribution analysis}
\end{figure}

\section{Conclusion}
This work introduces ensemble learning to GAN-based anomaly models for anomaly detection. Extensive experiments show that the ensemble method achieves superior results across various applications over single models. We conduct theoretical analysis to rigorously explain why the discriminator of a GAN is very useful for  anomaly detection and why an ensemble further improves the training of discriminators. Our experimental analysis of encoding vectors and hidden vectors verifies our theoretical analysis and further shows the benefit of ensemble learning.

\section{Acknowledgments}
Thank all reviewers for constructive comments. Li-Ping Liu were supported by DARPA under the SAIL-ON program (Contract number: W911NF-20-2-0006).
Any opinions, findings, and conclusions or recommendations expressed in this material are those of the authors and do not necessarily reflect the views of DARPA.

\bibliography{references.bib}

\newpage

\appendix
\onecolumn 

\begin{table*}[t]
\centering
\scalebox{0.9}{
\begin{tabular}{c|c|c|c|c|c|c|c}
\hline
              & \multicolumn{4}{c}{f-AnoGAN}   & \multicolumn{3}{|c}{GANormaly} \\\hline
Dataset       & MNIST & CIFAR-10 & OCT & KDD99 & MNIST    & CIFAR-10   & OCT  \\\hline
%dimension     & 32*32 &  32*32*3 & 64*64 & 122    & 32*32  &  32*32*3 & 64*64      \\
\# encoding/hidden     & 32    &   32     &  256   &  16    &  128   &   128    &  256   \\
learning rate & 1e-4  & 1e-4     &  1e-4   & 1e-4 &  2e-4  &   2e-4   & 2e-4   \\
GAN type      &  WGAN  &   WGAN  & WGAN & WGAN     &   DCGAN    &DCGAN & DCGAN \\
% optimizer     &  Adam     &          &     &       &          &            &       \\
batchsize     &    64   &     64     &   64  &   128    &     64  &   64& 64\\
epochs        &   25(*3)    &     25(*3)     &  50(*3)   &  30(*3)     &     15(*3)     &20(*3)& 35(*3)\\
$\beta$     & 1  & 9  & 1  &  39 & 9  & 9  &  9 \\\hline
\multicolumn{1}{c}{}\\\hline

    & \multicolumn{4}{c}{EGBAD}   & \multicolumn{3}{|c}{Skip-GAN} \\\hline
Dataset       & MNIST & CIFAR-10 & OCT & KDD99 & MNIST    & CIFAR-10   & OCT  \\\hline
%dimension     & 32*32 &  32*32*3 & 64*64 & 122    & 32*32  &  32*32*3 & 64*64      \\
\# encoding/hidden     & 100    &   100     &  256   &  32    &  100   &   100    &  256   \\
learning rate & 2e-4  & 2e-4     &  2e-4   & 2e-4 &  2e-4  &   2e-4   & 2e-4   \\
GAN type      &  BiGAN  &   BiGAN  & BiGAN & BiGAN     &   UNET    &UNET & UNET \\
% optimizer     &  Adam     &          &     &       &          &            &       \\
batchsize     &    256   &     256     &   64  &   1024    &     256  &   256 & 64\\
epochs        &   10(*3) & 10(*3) &  25(*3) & 5(*3)&     10(*3)&10(*3)& 25(*3)\\
$\beta$     & 0.1  & 0.1  & 9  &  0.1 & 0.1  & 0.1  &  9 \\\hline
\end{tabular}
}
\caption{Hyperparameter details for all learning models on all datasets. These hyperparameter settings are all from previous work obtained their best results. The training of an ensemble model uses three times of the number of training epochs of a base model (see the ``Batch training'' subsection.). ``\# encoding/hidden'' is the dimension of encoding vectors from encoders as well as the dimension of hidden vectors from the discriminator. }
\label{details}
\end{table*}
\section{Appendix}
\subsection{Dataset Details}
\label{appendix:1}
\subsubsection{MNIST dataset} The MNIST dataset  \citep{lecun-mnisthandwrittendigit-2010} contains 10 classes of images of hand-written digits from 0 to 9. In the anomaly detection setting, we create 10 different splits by treating each class as anomalous and the remaining 9 classes as normal. In each split, the training set has 5,4000 normal samples; and the test set contains 7,000 anomalous samples and 9,000 normal samples. 

\subsubsection{CIFAR-10 dataset } 
CIFAR-10 \citep{krizhevsky2009learning} consists of color images with size $32 \times 32$. It also have 10 classes. Similar to previous dataset, we create 10 splits with each class as an anomalous class. In a split,the training set has 45,000 normal samples; and the test set has 9,000 normal samples and 6,000 anomalous samples.

\subsubsection{OCT dataset} 
The Optical Coherence Tomography (OCT) dataset  \citep{oct} contains high-resolution clinical images. It has four classes: CNV, DME, DRUSEN, and NORMAL.  There are 26,315 instances for the NORMAL class, and we treat these as normal instances. The original dataset has a test set with 250 instances for each of the CNV, DME, and DRUSEN classes. We treat these three classes as abnormal classes. We create four splits from the dataset. In the first three splits, the normal class is matched with each abnormal class. The training set has (26,315 - 250) normal instances, and the test set has 250 normal instances and 250 abnormal instances. In the last split, the training set has (26,315 - 750) normal instances, and the test set has all anomalous instances and 750 normal instances. 

Following \citet{f-Anogan}, we sample a random patch ($64\times 64$) from an original image  to compute gradients for training. We divide a test image 16 patches and average their anomaly scores. The approach helps to reduce the model complexity, improve the training stability, and accelerate both training and testing. 

\subsubsection{KDD99 dataset}
The KDDCUP99 10-percent dataset \citep{Dua:2019} contains 20\% data labeled as ``normal'' and 80\% data labeled as ``attack''. Following \citet{DAGMM}, we also treat the ``normal'' data as anomalous class and the ``attack'' data as normal class. Furthermore, we randomly sample 75\% of normal instances for training, and the rest data is used for testing. 

\subsection{Experiment Details}
\label{appendix:2}
We implemented our framework in PyTorch \citep{paszke2017automatic} and ran on a server with a Tesla V100 and CUDA 10.0. Table \ref{details} shows detailed hyperparameter settings. Most of settings are from the original work proposing base models. For all experiments, we use Adam optimizer to train the network with a lambda decay $\beta_1$ = 0.5, and momentums $\beta_2$ = 0.999. 

The original implementation of f-AnoGAN is hard-coded for input size $64 \times 64$. We have modified the implementation so that it also works for image patches of size $32 \times 32$. 
The original EGBAD implementation uses a BiGAN that only works for images with size $28 \times 28$. We re-implement the algorithm so that it works for other image sizes. 

In GANormaly and Skip-GANomaly, we follow the same configuration as the original papers \cite{akcay2018ganomaly, akccay2019skip} in most of the experimental settings.

A baseline model uses settings that give the best results in the original papers. For an ensemble, the settings for its base models are the same as the corresponding baseline. An ensemble is trained with three times of training epochs of a baseline, then a base model in an ensemble gets the same number of updates as its corresponding baseline model.

% \subsection{Inference Time Details}
% Table \ref{table:inference time} shows average inference time for base models and ensemble models. All datasets were run on NVIDIA Tesla V100 with Pytorch 1.4.

% \begin{table}[H]
% \caption{Average inference time over 500 batches (ms)}
% \label{table:inference time}
% \centering
% \begin{tabular}
% {l|cccc}
% \hline
% Method & MNIST & CIFAR-10 & OCT & KDD\\\hline
% f-AnoGAN & &  &  & \\
% EGBAD & &  &  & \\
% GANomaly & &  &  & \\
% Skip-GANomaly & & & & \\\hline
% $\text{f-AnoGAN}^\textbf{en}$ & &  & &\\
% $\text{EGBAD}^\textbf{en}$ & & & &\\
% $\text{GANomaly}^\textbf{en}$ & &  &  &\\
% $\text{Skip-GANomaly}^\textbf{en}$ & &  &  & \\
% \hline
% \end{tabular}
% \end{table}

\end{document}